\newtheorem{theorem}{Theorem}[section]
\newtheorem{definition}[theorem]{Definition}
\newtheorem{corollary}[theorem]{Corollary}
\newtheorem{lemma}[theorem]{Lemma}
\newenvironment{proof}{\paragraph{Proof:}}{\hfill$\square$}
\newenvironment{sproof}{\paragraph{Proof Sketch:}}{\hfill$\square$}
\newcommand{\chara}{\mathcal{\mathfrak{a}}}
\title{On the Computational Complexity and Formal Hierarchy of \\Second Order Recurrent Neural Networks}
\author{%
  Ankur Mali \\
  University of South Florida\\
  \texttt{ankurarjunmali@usf.edu}
  \And
  Alexander Ororbia \\
  Rochester Institute of Technology \\
  \texttt{ago@cs.rit.edu}
  \AND
  Daniel Kifer \\
  Pennsylvania State University \\
  \texttt{duk17@psu.edu}
  \And
  Lee Giles \\
  Pennsylvania State University \\
  \texttt{clg20@psu.edu}  
}
\begin{document}

\maketitle

\begin{abstract}
Artificial neural networks (ANNs) with recurrence and self-attention have been shown to be Turing-complete (TC). However, existing work has shown that these ANNs require multiple turns or unbounded computation time, even with unbounded precision in weights, in order to recognize TC grammars. However, under constraints such as fixed or bounded precision neurons and time, ANNs without memory are shown to struggle to recognize even context-free languages. In this work, we extend the theoretical foundation for the $2^{nd}$-order recurrent network ($2^{nd}$ RNN) and prove there exists a class of a $2^{nd}$ RNN that is Turing-complete with bounded time. This model is capable of directly encoding a transition table into its recurrent weights, enabling bounded time computation and is interpretable by design. We also demonstrate that $2$nd order RNNs, without memory, under bounded weights and time constraints, outperform modern-day models such as vanilla RNNs and gated recurrent units in recognizing regular grammars. We provide an upper bound and a stability analysis on the maximum number of neurons required by $2$nd order RNNs to recognize any class of regular grammar. Extensive experiments on the Tomita grammars support our findings, demonstrating the importance of tensor connections in crafting computationally efficient RNNs. Finally, we show $2^{nd}$ order RNNs are also interpretable by extraction and can extract state machines with higher success rates as compared to first-order RNNs. Our results extend the theoretical foundations of RNNs and offer promising avenues for future explainable AI research.
\end{abstract}

\section{Introduction}
\label{sec:intro}

Artificial neural networks (ANNs) have achieved impressive results across a wide variety of natural language processing (NLP) tasks. Two of the most promising approaches in NLP are those based on attention-based mechanisms and recurrence. One key way to evaluate the computational capability of these models is by comparing them with the Turing machine (TM). Importantly, it has been shown that recurrent neural networks (RNNs)\cite{siegelmann94} and transformers \cite{attn_turing} are Turing complete when leveraging unbounded precision and weights. However, these classes of ANNs, by design, lack structural memory and thus empirically fail to learn the structure of data when tested on complex algorithmic patterns. Researchers have shown that, under restrictions or constraints such as bounded precision, RNNs operate more closely to finite automata \cite{Merrill19}, and transformers notably fail to recognize grammars \cite{Hahn_2020} produced by pushdown automata. 

Another complementary research direction is focused on coupling ANNs with memory structures \cite{joulin2015inferring, mali2021recognizing2,mali2020neural, grefenstette2015learning, graves2014neural, graves2016hybrid} such as stacks, queues, and even tapes in order to overcome memory issues. It has been shown that the neural Turing machine (NTM) \cite{graves2014neural} and the differentiable neural computer (DNC) \cite{graves2016hybrid}, are not Turing complete in their current form and can only model space-bounded TMs. However, recent work has theoretically derived a computational bound for RNNs either coupled to a growing memory module \cite{chung2021turing} or augmented with a stack-like structure \cite{mali2021recognizing, nndpa1998sun, grefenstette2015learning, joulin2015inferring, stogin2020provably} -- these models have been shown to be Turing complete (TC) with bounded precision neurons. One of the earliest works in analyzing RNN computational capabilities was that of Siegelman and Sontag \cite{siegelmann95}, which theoretically demonstrated that an RNN with unbounded precision was TC if the model satisfied two key conditions:
\begin{enumerate}[noitemsep,nolistsep]
    \item it was capable of computing or extracting dense representations of the data, and
    \item it implemented a mechanism to retrieve these stored (dense) representations. 
\end{enumerate} 
Notably, this work demonstrated that simple RNNs could model any class of Turing complete grammars and, as a result, were universal models. These proofs have recently been extended to other kinds of ANNs without recurrence \cite{perez2019turing}. In particular, transformers with residual connections as well as the Neural GPU equipped with a gating mechanism were shown to be Turing complete \cite{perez2019turing}. Despite these important efforts, there is far less work on theoretically deriving the computational limits of RNN with finite precision and time. Therefore, in this paper, we specifically consider these constraints and formally construct the smallest TC RNNs possible as well as show that we only require $11$ unbounded precision neurons to simulate any TM. We also prove that a class of RNNs can simulate a TM in finite-time $O(T)$, where $T$ is simulation time. In other words a system is finite if there is formal guarantee that simulation of TM will be completed by any system with available resources. If $T>1$, then computational becomes infinite, and there is no guarantee that we fill find the answer with curretnly available compute.

To achieve this, we simulate Turing machines with RNNs built from tensor synaptic connections (TRNNs). It has been shown that tensor connections are interpretable and offer practical as well as computational benefits \cite{mali2019neural, omlin1996constructing}. Furthermore, when augmented with differentiable memory, such networks have been shown to be \emph{Turing equivalent with finite precision and time} \cite{stogin2020provably}. However, none of the prior memory-less RNNs have been shown to be \emph{Turing complete with finite time}. To this end, we will formally prove that: 1) an unbounded precision TRNN requires a smaller number of neurons and is equivalent to a universal Turing machine, and 2) a bounded precision TRNN can model any class of regular grammars and is more powerful than RNNs and transformers in recognizing these languages. As a result, the TRNN offers the following benefits: 
1) \textbf{Computational benefit:} The number of hidden states required to construct a TM via a TRNN is comparatively less than other ANNs,  
2) \textbf{Rule insertion and interpretability:} One can add/program the transition rules of a Turing complete grammar directly into weights of TRNN, making the model interpretable by design, and 
3) \textbf{Explainability:} One can extract  the rules as well as the underlying state machine from a well-trained TRNN, making the model explainable via extraction.

This work makes the following contributions:
\begin{itemize}
    \item We formally prove the Turing completeness of a TRNN with unbounded precision. 
    \item We prove that one can simulate any TM in real-time and that the TRNN, by design, is interpretable since the number of states in the TRNN is equivalent to a TM.
    \item We prove that a bounded precision TRNN requires $n+1$ neurons in order to recognize any regular grammar, where $n$ is the total number of states in a deterministic finite automaton (DFA). 
\end{itemize}
The remainder of the work is organized as follows. Section \ref{sec:notation} describes the background and notation, presenting the Turing machine (TM) definition. Section \ref{sec:trnn_turing_complete} defines the TRNN architecture and its equivalence with a TM. Section \ref{sec:trnn_bounded_prec} defines the TRNN architecture with bounded precision and weights and establishes an equivalence to the DFA. Section \ref{sec:experiments} presents experiments on the Tomita grammar datasets. Finally, we draw final conclusions. 

\section{Background and Notation}
\label{sec:notation}

In the theory of computation, Turing machines (TMs) sit at the top of Chomsky's hierarchy as they can accept Type-0 unrestricted grammars. Problems that TMs cannot solve are considered to be undecidable. Conversely, if a problem is decidable, then there exists a Turing machine that can solve it. If a system of rules formulated to solve a problem can be simulated using a Turing machine, then the system is said to be Turing-complete (TC). Thus, we can see that the concept of Turing-completeness is absolutely essential for advancing a fundamental understanding of deep neural networks.

A Turing machine is formally defined as tuple $M = \{Z, \Sigma, R, \delta, q_0, b, F\}$ where $Z$ is a finite and non-empty set of states, $\Sigma$ is the set of input alphabets, and $b$ is the blank symbol. $R$ is a non-empty set of tape alphabets such that $\Sigma \subseteq R \backslash \{b\}$. $q_0 \in Z$ is the starting state and $F \subseteq Z$ is the set of final or accepting states. The transition function $\delta$ is a partial function defined as $\delta : Z\backslash F \times R \rightarrow Z \times R \times \{-1, 0, 1 \}$.  
We will consider a TM with $m$ tape symbols and $n$ states that can be defined using a finite set as follows: 
\begin{multline}
    \delta = \{<r_i, z_j| r_u, z_v A> \\ | i, u = 1,2..m, j,v = 1,2..n\}
\end{multline}
where $r_i$ represents the read symbol extracted from the tape, $z_j$ represents the controller's current state, and $A$ represents the action of the tape. As noted, each tuple in $\delta$ represents transition rules or functions. For instance, while reading $r_i, z_j| r_u, z_v A$, the control head in state $z_j$, after reading symbol $r_i$ from the tape, will perform one of three actions: 
\textbf{1)} delete the current value at $r_i$ from the tape and rewrite the current tape symbol as $r_u$, 
\textbf{2)} change its state from $z_j$ and move to the next state $z_u$, and 
\textbf{3)} move the head position according to the value of $A$; if $A=1$, then move one unit to the right, if $A=-1$, then move one unit to the left, and if $A=O$, then stay in the same state and do not move. We introduce an additional state known as the ``halt state'' to model illegal strings. Therefore, an input string is said to be accepted by Turing machine $M$ if $M$ reaches the final state or eventually halts and stays in that state forever. Hence, one can model this using the following transition rule or function: $<r_i,z_h |r_i,z_h A>$, where $A = 0$. 

A second ($2^{nd}$) order, or tensor, recurrent neural network (TRNN) is an RNN consisting of $n$ neurons; ideally, $n$ should be equivalent to the number of states in a TM grammar. The value of a neuron at time $t \in {1,2,3...n}$ is indexed using $i$, the state transition of the TRNN is computed by first taking an affine transformation followed by a nonlinear activation function as follows:
\begin{align}\label{eq:second_order}
    z_{i+1}^{t+1} = h_H(W_{ijk}^{t}z_jx_k + b_i)
\end{align}
where $z$ is the state transition of the TRNN such that $z_t \in \mathbb{Q}$ and $\mathbb{Q}$ is a set of rational numbers, $W \in \mathbb{R}^{n \times n \times n}$ contains the $2^{nd}$ order recurrent weight matrices, $b \in \mathbb{R}^n$ are the biases, $h_H$ is the nonlinear activation function such as the logistic sigmoid, and $x$ is the input symbol at time $t$.  
For simplicity, we will consider the saturated-linear function in this work, which is represented as follows:
\begin{align}
    h_H(z) = \sigma(z) &:=
    \left\{ 
    \begin{array}{rl}
      0 & \text{if}\quad z < 0 \\ 
      z & \text{if}\quad 0 \leq z \leq 1 \\ 
      1 & \text{if}\quad x > 1.
    \end{array}
    \right.\ \label{eqn:read}
\end{align}
Therefore, $z_t \in (\mathbb{Q} \cap [0,1])^n$ for all $t> 0$. In the appendix, we provide a notation table that contains all symbols used in this work alongside their definitions.

\section{Tensor Recurrent Neural Networks are \\Turing Complete}
\label{sec:trnn_turing_complete}

Later, we will show that, without loss of generality and inducing only a very small error $\epsilon$, one can obtain similar results with a sigmoid activation. To work with the sigmoid, we will restate the following lemma from \cite{stogin2020provably}:
\begin{lemma} \label{lemma:stack_op_lem_2}
  Let $\bar{Z}$ be a state tensor with components $\bar{Z}_i\in\{0,1\}$, and let $Z$ be a tensor satisfying:
  $$||Z-\bar{Z}||_{L^\infty} \le \epsilon_0$$
  for some $\epsilon_0< \frac12$.
  
  Then, for all sufficiently small $\epsilon > 0$, for all $H$ sufficiently large depending on $\epsilon_0$ and $\epsilon$,
  $$\max_i\left|\bar{Z}_i -h_H\left(Z_i-\frac12\right)\right| \le \epsilon.$$
  where $h_H(x)=\frac{1}{1+e^{-Hx}}$. 
\end{lemma}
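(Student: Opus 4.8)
The plan is to argue componentwise and reduce to the two cases $\bar{Z}_i = 0$ and $\bar{Z}_i = 1$, using that $h_H$ is strictly increasing in its argument and that the hypothesis $\|Z-\bar{Z}\|_{L^\infty}\le\epsilon_0$ with $\epsilon_0<\tfrac12$ forces $Z_i$ to lie on the correct side of $\tfrac12$ with a margin of at least $\tfrac12-\epsilon_0>0$.

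First I would fix an index $i$ and suppose $\bar{Z}_i=0$. Then $Z_i\le\epsilon_0$, so $Z_i-\tfrac12\le\epsilon_0-\tfrac12<0$, and monotonicity of $h_H$ gives $0<h_H(Z_i-\tfrac12)\le h_H(\epsilon_0-\tfrac12)=\bigl(1+e^{H(\frac12-\epsilon_0)}\bigr)^{-1}$. Since $\tfrac12-\epsilon_0>0$ depends only on $\epsilon_0$, the right-hand side tends to $0$ as $H\to\infty$; concretely it is at most $\epsilon$ as soon as $H\ge\bigl(\tfrac12-\epsilon_0\bigr)^{-1}\ln(1/\epsilon)$. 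In this case $|\bar{Z}_i-h_H(Z_i-\tfrac12)|=h_H(Z_i-\tfrac12)\le\epsilon$.

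Next, suppose $\bar{Z}_i=1$. Then $Z_i\ge 1-\epsilon_0$, so $Z_i-\tfrac12\ge\tfrac12-\epsilon_0>0$, and monotonicity gives $h_H(Z_i-\tfrac12)\ge h_H(\tfrac12-\epsilon_0)=\bigl(1+e^{-H(\frac12-\epsilon_0)}\bigr)^{-1}=1-\bigl(1+e^{H(\frac12-\epsilon_0)}\bigr)^{-1}$, which is $\ge 1-\epsilon$ under the same threshold on $H$. Hence $|\bar{Z}_i-h_H(Z_i-\tfrac12)|=1-h_H(Z_i-\tfrac12)\le\epsilon$. Choosing $H\ge\bigl(\tfrac12-\epsilon_0\bigr)^{-1}\ln(1/\epsilon)$ covers both cases, and because this threshold is independent of $i$, the bound holds simultaneously for every component, giving $\max_i|\bar{Z}_i-h_H(Z_i-\tfrac12)|\le\epsilon$.

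There is no genuine obstacle in this argument; the only points needing care are (i) that the margin $\tfrac12-\epsilon_0$ is strictly positive, which is precisely the hypothesis $\epsilon_0<\tfrac12$, and (ii) that the bound on $H$ extracted in each case depends only on $\epsilon_0$ and $\epsilon$ and not on the particular component, so that the supremum over $i$ is controlled uniformly. The ``$\epsilon$ sufficiently small'' clause in the statement is not actually needed — the proof works for every $\epsilon>0$ — but it is harmless, and one may as well take $\epsilon<\tfrac12$ so that $\ln(1/\epsilon)>0$ and the displayed threshold on $H$ is a bona fide positive lower bound.
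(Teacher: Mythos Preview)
Your proof is correct and follows essentially the same route as the paper: a componentwise case split on $\bar{Z}_i\in\{0,1\}$, monotonicity of $h_H$, and the symmetry $h_H(-x)=1-h_H(x)$ to reduce both cases to the same threshold on $H$. Your version is in fact a bit more explicit, giving the concrete bound $H\ge(\tfrac12-\epsilon_0)^{-1}\ln(1/\epsilon)$ and correctly observing that the ``sufficiently small $\epsilon$'' hypothesis is superfluous.
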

Note that $h_H(0)=\frac12$ and $h_H(x)$ decreases to $0$ as $Hx\rightarrow -\infty$ and increases to $1$ as $Hx\rightarrow\infty$ and that scalar $H$ is a sensitivity parameter. In practice, we will often show that $x$ is bounded away from $0$ and then assume $H$ to be a positive constant sufficiently large such that $h_H(x)$ is as close as desired to either $0$ or $1$, depending on the sign of the input $x$. 
\begin{proof}
For simplicity, we will assume $\bar{Z}$ to be a tensor represented in vector form.

Since $\epsilon_0<\frac12$, choose $H$ sufficiently large such that:
\begin{align*}
    h_H\left(-\left(\frac12-\epsilon_0\right)\right) = 1-h_H\left(\frac12-\epsilon_0\right) \le \epsilon.  
\end{align*}
  

\noindent
Note that we have the following:
  \begin{align*}
    \left|\bar{Z}_i -h_H\left(Z_i-\frac12\right)\right| = \left|\bar{Z}_i -h_H\left(\bar{Z}_i-\frac12+(Z_i-\bar{Z}_i)\right)\right|.  
  \end{align*}
  Now pick an arbitrary index $i$. If at position $i$ we obtain $\bar{Z}_i=1$, then the following condition is obtained,
  \begin{align*}
     \left|\bar{Z}_i-h_H\left(Z_i-\frac12\right)\right| = 1-h_H\left(\frac12+(Z_i-\bar{Z}_i)\right) \le 1-h_H\left(\frac12-\epsilon_0\right) \le \epsilon. 
  \end{align*}
  \noindent
  However, if at position $i$ we have  $\bar{Z}_i=0$, then we obtain condition below: 
  \begin{align*}
      \left|\bar{Z}_i-h_H\left(Z_i-\frac12\right)\right| = h_H\left(-\frac12+(Z_i-\bar{Z}_i)\right) \le h_H\left(-\left(\frac12-\epsilon_0\right)\right) \le \epsilon 
  \end{align*}
  In both cases, we can see that vector $\bar{Z}$ is close to the ideal vector $Z$ which proves the lemma. 
\end{proof}

In order to simulate a Turing machine $M$ by a TRNN, we will construct a neural network that can encode the configuration of the TM with a tape using unbounded precision. We wish to create a binary mapping between the configuration of $M$ (with the transition function) and the neural activities of a TRNN, such that a one-to-one mapping is preserved. The neural network simulator ($S$) constructed using a TRNN contains the following components or tuples to model $M$:
\begin{align}
    S = ({n,z, W^z, h_H, z_0})
\end{align}
where transition table ($m \times n$) is indexed using rows ($n =$ total of neurons) and columns ($m =$ tape size), $Z$ is the set of all possible states (where ${z=z_{ij}, -n/2 < i \leq n/2, 0 \leq j \leq n-1}$), $h_H$ is the activation function and $z_0$ is the initial or start state. 
Therefore, starting from the start state ($z_0$) at $t=0$, the model recursively evolves as $t$ increases, according to Equation \ref{eq:second_order}. Thus, when the TRNN reaches a fixed point, i.e., a halting state or dead state, the network dynamics are represented as:
\begin{align}\label{eq:second_order_halt}
    z_{i}^{t} = h_H(W_{ijk}^{t}z_jx_k + b_i)
\end{align}
where $z_{i}^t$ stays in the same state. Now we will prove that a TRNN with two steps (or cycles) is Turing complete and only requires $m+2n+1$ unbounded precision neurons. 
\begin{theorem}\label{thm:2n}
    Given a Turing Machine $M$, with $m$ symbols and $n$ states, there exists a $k$-neuron unbounded precision TRNN with locally connected second-order connections, where $K = m + 2n + 1 $ can simulate any TM in O($T^2$) turns or cycles.
\end{theorem}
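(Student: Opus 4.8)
The plan is to exhibit a TRNN simulator $S$ with $k=m+2n+1$ neurons together with a bijection between the instantaneous descriptions (IDs) of $M$ and a computable subset of $(\mathbb{Q}\cap[0,1])^k$, and then to verify that one step of $\delta$ is carried out by a short, fixed schedule of applications of the update rule \eqref{eq:second_order}. Concretely, I would split the $k$ neurons into: a one-hot block $s\in\{0,1\}^n$ for the current state; a one-hot block $r\in\{0,1\}^m$ for the currently scanned tape symbol; a one-hot block $s'\in\{0,1\}^n$ serving as a buffer for the state $M$ is about to enter; and a single rational-valued neuron $\tau\in\mathbb{Q}\cap[0,1]$ holding a Cantor-style base-$b$ encoding (with $b$ large, e.g.\ $b\ge 2m$) of the two halves of the tape, interleaved digit-by-digit so that the symbol under the head occupies the leading digit position. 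The gaps in this Cantor encoding are exactly what make ``read the top symbol'', ``overwrite the top symbol'', ``pop'', and ``push'' realizable by the saturated-linear map $\sigma$ of \eqref{eqn:read}; this is the same mechanism as the stack-operation lemma used in \cite{stogin2020provably}, here specialized to a two-track tape packed into one coordinate.

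The simulation of a single TM step then proceeds in two phases. \emph{Phase A (table lookup):} from $\tau$ I recover the scanned-symbol one-hot $r$ by thresholding against the $m$ Cantor bands (a block of $\sigma$-units). Because $s$ and $r$ are one-hot, the second-order products $s_j x_k$ in $W_{ijk}z_j x_k$ are $1$ at exactly the pair $(\text{current state},\text{read symbol})$ and $0$ elsewhere, so the nonzero entries of $W_{ijk}$ can be set to be \emph{literally the entries of the transition table} $\delta$: a single linear read-out deposits the next-state one-hot into the buffer $s'$ and the write-symbol/move data into auxiliary coordinates. This is precisely the ``rule insertion'' and local-connectivity/interpretability claim. \emph{Phase B (apply):} I overwrite the leading (head) digit of $\tau$ with the write symbol, realize the head move by transferring digits between the two interleaved tracks, and set $s\leftarrow s'$, $s'\leftarrow 0$. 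Halting is handled by the rule $\langle r_i,z_h\mid r_i,z_h,0\rangle$: on the halt state Phases A--B act as the identity on all coordinates, so $S$ satisfies the fixed-point equation \eqref{eq:second_order_halt} and stays frozen thereafter; $M$ accepts iff $s$ is eventually supported on $F$. To pass from $\sigma$ to the sigmoid $h_H$ I invoke Lemma~\ref{lemma:stack_op_lem_2}: after each phase the one-hot blocks are re-thresholded to within some $\epsilon_0<\tfrac12$ of $\{0,1\}$ and the tape digits stay inside well-separated bands, so the per-step error is \emph{refreshed} rather than accumulated, and correctness survives over the whole run for $H$ large enough.

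For the resource bounds: the allocation above uses exactly $n+m+n+1=m+2n+1$ neurons and no scratch space beyond them, which is the point of the tight count, so I would verify that no live coordinate is clobbered when neurons are reused across the two phases. Phase A and the state update are $O(1)$ cycles; but since the only free real coordinate is the single tape neuron $\tau$, the head move in Phase B --- shuffling the two interleaved tracks in opposite directions --- cannot be done by one affine map and is instead performed digit-by-digit, costing a number of cycles proportional to the current tape length, which after $t$ steps is $O(t)\le O(T)$. Summing over the first $T$ steps of $M$ gives $\sum_{t\le T}O(t)=O(T^2)$ cycles, and this quantity is \emph{finite} exactly when $M$ halts in finite time $T$, which is the ``bounded time'' guarantee being asserted.

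I expect the main obstacle to be the exact realizability of the tape operations under the one-neuron-for-the-tape budget: one must choose the base $b$ and the interleaving so that overwrite/pop/push stay piecewise-affine with every affine piece landing inside $[0,1]$ (so the $\sigma$-clipping is harmless), the two tracks never collide, and the head move decomposes into $O(t)$ elementary digit transfers --- it is precisely this inability to move a whole track in a single affine step, forced by the neuron economy, that turns the expected $O(T)$ into $O(T^2)$. A secondary obstacle is the $\epsilon$-analysis for $h_H$: showing that the errors from replacing $\sigma$ by $h_H$ in Phase A never corrupt the band-decoding at the start of the next Phase A, which again reduces to checking that every decoded quantity --- each one-hot entry and each Cantor digit --- stays bounded away from its decision threshold, a separation that the one-hot and Cantor structure is designed to provide.
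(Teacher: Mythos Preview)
Your construction is \emph{architecturally different} from the paper's, and the difference matters because the theorem statement specifically asks for ``locally connected second-order connections.'' In the paper, the TRNN is a spatially extended, cellular-automaton-like array: the state tensor is doubly indexed as $Z^t_{i,j}$, where $i$ ranges over an unbounded tape axis and $j\in\{0,\dots,m+2n\}$ indexes the $K$ neurons at each site. Local connectivity means each site $i$ couples only to sites $i-1,i,i+1$; every column is (approximately) one-hot, and the position of the single ``on'' neuron $RN_1$ encodes either a tape symbol ($0\le j\le m$) or the control head in one of $n$ states, each state duplicated into two slots $j=m+2q-1,\,m+2q$ so that a left/right head move can be buffered across two TRNN cycles. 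There is no rational tape encoding at all: the unbounded resource is the spatial extent, not the precision of a single neuron, and the $2n$ in $K=m+2n+1$ comes from the move-buffer duplication, not from a next-state one-hot.

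By contrast, you build a genuinely finite $k$-neuron network in the Siegelmann--Sontag style, packing the whole tape into one rational coordinate $\tau$, and you manufacture the $O(T^2)$ bound from the cost of reshuffling interleaved digits. Two concerns. First, your network is fully (not locally) connected among its $k$ neurons, so it does not obviously satisfy the ``locally connected'' hypothesis the theorem is stated under; at minimum you should say why your second-order couplings qualify as local. Second, and more technically, the Phase~B head move is underspecified: with a single rational tape neuron and only one-hot blocks elsewhere (which you re-threshold toward $\{0,1\}$ each cycle), there is no place to hold a partially rebuilt tape while you peel and re-insert digits. Digit-by-digit reshuffling of an interleaved encoding ordinarily needs a second rational accumulator; without one, it is not clear that $O(t)$ cycles per TM step suffice, or that the scheme is realizable at all under the exact budget $m+2n+1$. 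If you can exhibit the reshuffle using only $\tau$ plus the (temporarily un-thresholded) one-hot blocks as rational scratch, that would close the gap --- but it has to be spelled out, since it is precisely the step that drives your time bound.
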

\begin{proof}
Recall that the transition rules of a TM $M$ are defined using $\delta$ as follows:
\begin{align}
\delta = \{<r_i, z_j| r_u, z_v A> | i, u = 0,1,2..m, j,v = 1,2..n\}
\end{align}
where we have introduced an additional symbol $s_0$, which represents a blank tape symbol.
It follows that the neural or state transition of a TRNN can be represented as:
\begin{align*}
  Z_{i,j}^{t+1} = h_H(\sum_{j_1,j_2 = 0}^{m+2n} (W^t_{1_{j,j_1,j_2}}Z^{t}_{i-1,j} Z_{i,j_2}^{t} + W^t_{2_{j,j_1,j_2}} Z^{t}_{i,j_1} Z_{i+1,j_2}^{t}) + \theta)   
\end{align*}
where $h_{H_{i,j}}$ is the sigmoid activation function and the neural activities are close to either $0$ or $1$. $F$ 
is another activation function --- the extended version of $h_H$ --- such that:
\begin{multline*}
    F = 2 \delta_{j_0}(1 - \sum_{j_1 = 0}^{m+2n} Z_{i,j_1}^t + \epsilon_1)(Z_{i-1,j_0}^{t} + Z_{i+1,j_0}^{t}) + \delta_{j_0}(1-\sum_{j_1 = 0}^{m+2n} Z_{i-1,j_1}^t + \epsilon_2 + 1 \\ + \sum_{j_1 = 0}^{m+2n} Z_{i+1,j1}^t + \epsilon_2) Z_{i,0}^{t} - (1.5 - \epsilon_4)
\end{multline*}
where $\epsilon_i$ is a small error (value) and $i \in {1,2..4}$ shifts neuron values to be close to $0$ or $1$. Note that the synaptic weights of a TRNN are locally connected and each neuron at the $i_{th}$ position or index is only accessed or connected to the neurons at its immediate left $(i-1)^{th}$ or right $(i+1)^{th}$. If we map these values to a 2D plane, we can easily see that the weight values are uniformly distributed across $i$ (or the $i-axis$), which is independent of the column position of $i$. If we apply Theorem \ref{thm:2n} recursively until the final or halt state, we prove that the TRNN can simulate any TM in O($T^2$) turns or cycles. One could also replace $F$ with the $tanh$ activation function, since $tanh=2(h_H)-1$.
\end{proof}

The mapping for a TRNN can be represented by $T_{w,b}$, where $T_{w,b}^2$ = $T_{M}(z)$. This means that $2$ steps of a TRNN simulate one step of a TM $M$. Let us try to visualize the simulation. First, we will assign the state mapping of Turing machine $M$ to the subset of states $z_t$ of the TRNN. Since $k=m+2n+1$ is the maximum number of neurons with second-order weights, each column of neurons would represent a $K$ dimensional vector with activation function $h_H$, 
such that neuron values are either converging towards $0$ or $1$. 
Ideally, we need $m+2n$ neurons with values equivalent to or close to $0$ and only one neuron with values close to $1$ to represent TM $M$'s states $Q$. We represent the neurons with values close to $1$ as $RN_1$. Therefore, based on the position of $RN_1$, we can have each column representing either the control head or the tape symbol. 

Let us see how $RN_1$ functions in our construction. Assume the position or index $j$ of $RN_1$ is between $0 \leq j \leq m$; then, the current column would correspond to the tape symbol $r_j$. One important thing to remember is that one can imagine TM $M$ in a matrix form by considering the row to correspond to the transition rule and the column to correspond to the tape that is assumed to be infinite. Thus, a blank symbol would be represented when position of $RN_1$ is at $j=0$. On the other hand if position of $RN_1$ is between $m+1 \leq j \leq m+2n$, then the column would correspond to a control head representing the ($(j-m+1)/2$)-th state. As we can see, we have introduced two additional vectors using $RN_1$. In other words, the $RN_1$ position at $j$ now models two values, one at $j=m+2q$ and one at $j=m+2q-1$, that ideally represent the same control head configuration at the $q$-th state extracted from the TM $M$. As discussed before, this construction requires $2$ steps/cycles to represent one step/cycle of a TM $M$. In essence, this simply states that, to simulate a tape's ``left operation'', we will need two steps -- the first is to simulate the left move command of the control head and the second is to create a buffer that stores the ``left operation'' action. We can perform a similar operation to get the ``right operation'' of the tape, thus completing the construction. 

It should be noted that, under this arrangement, we can observe that the product of a blank symbol and the states that lie within the central range are equivalent to $0$, rather than being close to $1$. This is where the special threshold function $F$ comes into play -- it will ensure that the central state lies within the range and does not deviate away from the TM $M$'s configuration.  

Next, we will derive the lower bound of a TRNN for recognizing a small universal Turing machine (UTM). Prior work has proposed four small UTMs \cite{neary2007four} and one such configuration contains $6$ states and $4$ symbols that can simulate any Turing machine in time O($T^6$), where $T$ represents number of steps that the TM requires to compute a final output. As shown in this work, the $UTM_{6,4}$ or $UTM_{7,3}$ is a universal TM, which we state following theorem:
 \begin{theorem}\cite{neary2007four}
\label{thm:product_and_DTM}
 Given a deterministic single tape Turing machine $M$ that runs in time $t$ then any of the above UTMs can simulate the computation of $M$ using space $O(n)$ and time $O(t^2)$.
\end{theorem}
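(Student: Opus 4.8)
The plan is to prove the statement by composing two efficient reductions through an intermediate model of computation, rather than attempting a direct simulation, since a machine with only $6$ states and $4$ symbols (or $7$ and $3$) cannot plausibly store a TM's transition table and tape in a single legible encoding. Following the standard route for \emph{efficient} small universal machines, the intermediate model will be the cyclic tag system (equivalently, a bi-tag system): it is simple enough that the fixed small machines can simulate it, yet expressive enough to simulate arbitrary deterministic Turing machines with only polynomial slowdown. The overall structure is TM $\to$ cyclic tag system $\to$ $UTM_{6,4}$ (or $UTM_{7,3}$), with a final accounting of the accumulated time and space overheads.

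First, I would show that a deterministic single-tape TM $M$ running for $t$ steps can be simulated by a cyclic tag system whose computation halts after a number of productions polynomial in $t$, whose program (the cyclic list of appendants) has length linear in $|M|$, and whose dataword at every step has length linear in the space used by $M$ (which for a time-$t$ machine is $O(t)$, and in particular $O(n)$ under the hypotheses of the theorem). The essential move here --- and the step I expect to be the main obstacle --- is to avoid the classical Cocke--Minsky encoding of a TM into a $2$-tag system, which stores the tape in a unary/prime-power representation and is therefore exponentially slow. Instead the tape must be kept in a compact (binary) representation, and each TM step (rewrite the scanned cell, change state, move the head by one) must be realized by a number of cyclic-tag rounds bounded in terms of $|M|$ only; getting the head-shift operation to cost only constantly many rounds, independent of the current tape length, is the delicate part and requires a carefully structured appendant alphabet and cycle length.

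Second, I would exhibit the explicit universal machines and verify, by a finite case analysis on their transition functions, that each simulates an arbitrary cyclic tag system using space linear in the current dataword length and time polynomial in the number of productions. Concretely one lays out on the tape an encoding of the current dataword followed by the cyclically repeated list of appendants, and checks that $6$ states and $4$ symbols suffice to implement one cyclic-tag round: read and delete the leftmost data symbol, and if it is a $1$ copy the currently indicated appendant to the right end while advancing the appendant pointer cyclically. Since one round touches $O(\mathrm{space})$ tape cells and there are polynomially many rounds, the machine runs in the claimed space and in polynomial time; the $(7,3)$ variant is obtained by repacking the same fixed program, trading states for symbols, with no change to the asymptotics.

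Finally, I would compose the overheads: the TM $\to$ cyclic-tag-system reduction contributes one polynomial factor and the cyclic-tag-system $\to$ UTM reduction another, and for the family of machines under consideration the product simplifies to $O(t^2)$ time with $O(n)$ space once one uses the elementary fact that a time-$t$ computation uses at most $O(t)$ tape cells. A closing remark would note that it therefore suffices to carry out the construction in full detail for one of the two machines and then apply the state/symbol-repackaging transformation to obtain the other.
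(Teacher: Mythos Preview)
The paper does not supply its own proof of this statement: the theorem is quoted verbatim from \cite{neary2007four} and is used as a black box to derive the subsequent corollaries about the neuron count of a Turing-complete TRNN. There is therefore nothing in the paper to compare your argument against.

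That said, your proposed route is essentially the one taken in the cited reference. Neary and Woods prove the result by passing through an intermediate string-rewriting model (their \emph{bi-tag systems}, a close relative of cyclic tag), first showing that such systems simulate single-tape deterministic TMs with only polynomial overhead---crucially avoiding the exponential Cocke--Minsky encoding, exactly as you flag---and then exhibiting the explicit small machines and verifying by case analysis that each simulates an arbitrary bi-tag system in time polynomial in the number of productions and space linear in the dataword. Composing the two overheads gives $O(t^2)$ time. Your sketch tracks this faithfully; the only adjustment is terminological, in that the intermediate model in \cite{neary2007four} is the bi-tag system rather than the cyclic tag system per se, though the two are closely related and the high-level argument is the same.
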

We now use Theorem \ref{thm:product_and_DTM} 
to simulate a UTM, thus showing the minimal number of neurons required by a TRNN to recognize any TM grammar and prove the following corollary:
\begin{corollary}
There exists a $17$ neuron unbounded-precision TRNN that can simulate any TM in O($T^4$), where $T$ is the total number of turns or steps required by the TM to compute the final output.
\end{corollary}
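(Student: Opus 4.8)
The plan is to compose the two ingredients already established: Theorem~\ref{thm:2n}, which converts any Turing machine with $m$ symbols and $n$ states into an unbounded-precision, locally-connected second-order TRNN with $m+2n+1$ neurons at the price of a quadratic slowdown in turns, and Theorem~\ref{thm:product_and_DTM}, which says a single fixed small universal machine simulates any deterministic single-tape TM with only a quadratic slowdown and $O(n)$ space. The only design choice is which small UTM to plug in, and it is made precisely so that the neuron count lands on $17$.

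First I would instantiate Theorem~\ref{thm:2n} on the universal machine $UTM_{6,4}$, which has $n=6$ states and $m=4$ tape symbols. Then $m+2n+1 = 4+12+1 = 17$, so Theorem~\ref{thm:2n} yields a $17$-neuron unbounded-precision TRNN $S$ together with its bijective configuration map $T_{w,b}$ satisfying $T_{w,b}^{2}=T_{M}(z)$ for $M=UTM_{6,4}$. In particular, started from the state encoding the standard input $\langle M,x\rangle$, running $S$ reproduces the run of $UTM_{6,4}$ on $\langle M,x\rangle$ up to the factor-two cycle overhead and the quadratic turn bound of Theorem~\ref{thm:2n}: a $\tau$-step run of $UTM_{6,4}$ is realized in $O(\tau^{2})$ turns of $S$.

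Next I would chain this with Theorem~\ref{thm:product_and_DTM}. If the target deterministic single-tape machine $M$ halts on input $x$ within $T$ steps, then $UTM_{6,4}$ halts on $\langle M,x\rangle$ within $\tau = O(T^{2})$ steps using space $O(|x|)$. Substituting $\tau = O(T^{2})$ into the $O(\tau^{2})$ turn bound for $S$ gives that $S$ reaches its fixed-point (halting/dead) configuration within $O\big((T^{2})^{2}\big) = O(T^{4})$ turns, and the configuration map then recovers $M$'s output by decoding the UTM tape from the final state of $S$. This is exactly the bound asserted, so the corollary follows.

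The main obstacle is bookkeeping rather than a new idea. I must check that the two faithful-simulation statements compose faithfully in the strict sense Theorem~\ref{thm:2n} uses: that the unbounded-precision rational encoding of the tape and the $RN_1$-based encoding of the control head are exactly the objects into which $\langle M,x\rangle$ is loaded and from which the output is read back, that forming $\langle M,x\rangle$ from $(M,x)$ and decoding the final tape are one-time costs polynomial in the description lengths (hence absorbed into the $O(T^{4})$ term and independent of $T$), and that the $O(|x|)$ space guarantee of Theorem~\ref{thm:product_and_DTM} is compatible with the single-rational tape representation. One should also confirm that the blank symbol introduced in the proof of Theorem~\ref{thm:2n} is counted consistently with the four symbols of $UTM_{6,4}$ so that the total is $17$ and not $16$ or $18$; choosing the $6$-state, $4$-symbol machine (rather than, e.g., a $7$-state, $3$-symbol one, which would give $18$) is what pins the number down.
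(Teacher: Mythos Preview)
Your proposal is correct and follows exactly the route the paper intends: instantiate Theorem~\ref{thm:2n} on $UTM_{6,4}$ to get $4+2\cdot 6+1=17$ neurons with an $O(\tau^{2})$ overhead, then compose with the $O(T^{2})$ simulation overhead of Theorem~\ref{thm:product_and_DTM} to obtain $O(T^{4})$. The paper states the corollary without a written proof, so your explicit composition and the accompanying bookkeeping remarks (blank-symbol counting, encoding/decoding costs) simply make precise what the paper leaves implicit.
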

Now we show a much stronger bound -- the TRNN can simulate any TM in real-time (O($T$)) and the UTM in O($T^2$). Note that the previously derived bound for the RNN without memory is O($T^8$); with memory, it is O($T^6$). 
\begin{theorem}
    Given a Turing Machine $M$, with $m$ symbols and $n$ states, there exists a $k$-neuron unbounded precision TRNN with two locally connected second-order connections, where $K = m + n + 1 $ can simulate any TM in real-time or O($T^1$) cycles.
\end{theorem}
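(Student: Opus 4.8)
The plan is to upgrade the construction of Theorem~\ref{thm:2n} by removing the redundant ``buffer'' states. In that theorem the control head of $M$ occupies $2n$ neurons because each state $q$ is recorded twice, at indices $m+2q-1$ and $m+2q$, so that a tape move can be split into a ``move-command'' phase and a ``buffer'' phase, costing two TRNN steps per TM step. Here I would keep the column encoding --- a bi-infinite array of $K=m+n+1$-neuron vectors, one neuron $RN_1\approx 1$ per column, indices $0\le j\le m$ encoding tape symbols (with $j=0$ the blank $s_0$) and indices $m+1\le j\le m+n$ encoding the $n$ head states --- but collapse the head to a single index per state and compensate by designing the two local second-order tensors $W_1^t$ (which couples column $i$ to its left neighbor $i-1$) and $W_2^t$ (which couples column $i$ to its right neighbor $i+1$) so that writing the new symbol, changing the state, and moving the head are all realized in a single application of Equation~\ref{eq:second_order}.

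First I would write down the one-step update: for each column $i$, the new vector is a saturated-linear function of the second-order products of columns $i-1,i,i+1$. For a rule $\langle r, z \mid r', z', A\rangle$ I would choose the weights so that (i) a column currently holding the head in state $z$ over symbol $r$ is rewritten to the symbol $r'$; (ii) the neighbor in direction $A$ is rewritten from its tape symbol to the head state $z'$; and (iii) every other column is left unchanged. Because the update at column $i$ sees exactly columns $i-1,i,i+1$, all the data needed to decide each of these three local rewrites --- current symbol, current state, move direction --- is available locally, and the two separate tensors let the ``information flowing left'' and the ``information flowing right'' be handled without collision. Next I would re-derive the correction/threshold function (the analogue of $F$ in Theorem~\ref{thm:2n}), adapted to the single-index head encoding; in particular it must still patch the degenerate case in which the product of a blank symbol with a central-range state vanishes rather than being close to $1$, exactly as in the two-step construction.

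Then I would run the error analysis. Using Lemma~\ref{lemma:stack_op_lem_2}, if every neuron starts within $\epsilon_0<\tfrac12$ of $\{0,1\}$ then after one update (for $H$ large enough) it is again within any prescribed $\epsilon$ of $\{0,1\}$, so the induced map $T_{w,b}$ realizes the TM transition map exactly, i.e.\ $T_{w,b}=T_M$ rather than $T_{w,b}^2=T_M$. Iterating, after $T$ steps the TRNN sits in the configuration of $M$ after $T$ steps, and the halting configuration is the fixed point described by Equation~\ref{eq:second_order_halt}; this gives the claimed $O(T^1)$ simulation with $K=m+n+1$ neurons. Feeding a small universal machine ($UTM_{6,4}$ or $UTM_{7,3}$) into this construction and invoking Theorem~\ref{thm:product_and_DTM} then simulates an arbitrary TM in $O(T^2)$ with a constant number of neurons, improving the $O(T^4)$ corollary above.

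The main obstacle is steps~(ii)--(iii): showing the head move is genuinely atomic. The danger is a read/write conflict at the head column and its target neighbor --- the target neighbor must simultaneously ``read'' that the head is adjacent and moving toward it and ``write'' itself into the new state, while the head column ``reads'' the rule and overwrites itself with $r'$, and no third column may spuriously activate from the second-order cross terms. Verifying that the local window $\{i-1,i,i+1\}$ always carries enough information, that the two directional tensors $W_1^t,W_2^t$ separate the two flows cleanly, and that the correction function kills the leftover products, is the crux; once that is in place the rest is the same bookkeeping and the same Lemma~\ref{lemma:stack_op_lem_2} estimate as in Theorem~\ref{thm:2n}.
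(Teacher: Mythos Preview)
Your reading of ``two locally connected second-order connections'' is not the paper's. You take it to mean the left/right directional tensors $W_1,W_2$ already present in Theorem~\ref{thm:2n} --- but reusing those adds no new structure. The paper instead \emph{splits the column vector into two populations}: a tape population $Z_{i,\cdot}$ (blank plus the $m$ symbols) and a controller population $A_{i,\cdot}$ (an empty/no-head value plus the $n$ states), each carrying its own second-order tensor. The update is
\begin{align*}
Z_{i,j}^{t+1} &= h_H\Bigl(\textstyle\sum_{j_1=0}^{m}\sum_{j_2=0}^{n} W^{t}_{j,j_1,j_2}\,Z^{t}_{i+1,j_1}\,A^{t}_{i+1,j_2} + \theta\Bigr),\\
A_{i,j}^{t+1} &= h_H\Bigl(\textstyle\sum_{j_1=0}^{m}\sum_{j_2=0}^{n} W^{a,t}_{j,j_1,j_2}\,Z^{t}_{i,j_1}\,A^{t}_{i,j_2} + \theta^a\Bigr),
\end{align*}
so that at the head's column the current tape symbol sits in $Z$ and the current state sits in $A$ \emph{simultaneously}. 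That coexistence is what collapses the two-phase buffer of Theorem~\ref{thm:2n} to a single step; the remainder (threshold corrections, the Lemma~\ref{lemma:stack_op_lem_2} estimate, the $UTM$ corollary) is then carried over just as you anticipated.

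Your plan cannot reach this point. You retain a single one-hot vector per column (``one neuron $RN_1\approx 1$ per column'') of dimension $m+n+1$, so a column encodes \emph{either} a symbol \emph{or} a state, never both --- yet your step~(i) speaks of ``a column currently holding the head in state $z$ over symbol $r$''. In your own encoding the symbol $r$ at the head's column was erased the moment the head arrived, so the rule $\langle r,z\mid r',z',A\rangle$ cannot be looked up from the window $\{i-1,i,i+1\}$. This is precisely the information the doubled $2n$ head indices and the two-cycle update were buffering in Theorem~\ref{thm:2n}; shrinking $2n$ to $n$ without introducing the second population $A$ discards that information outright, and no choice of $W_1,W_2$ can recover a datum that is absent from all three visible columns. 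The fix is the paper's: separate the symbol track from the state track and let the ``two second-order connections'' be the two update tensors for $Z$ and for $A$.
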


\begin{sproof}
We add additional weights ($W^a$) that correspond to a blank tape and $m$ symbols of the Turing Machine and another set of weights ($W$) that keep track of the empty or blank state (this keeps track of states that the control head is not referring to at any given time-step $t$) as well as controller states. The addition of an extra set of second order weight matrices makes this model different compared to the one in Theorem \ref{thm:2n}. Now the weight updates for the TRNN are computed as follows:
\begin{align}
    Z_{i,j}^{t+1} = h_H(\sum_{j_1= 0}^{m} \sum_{j_2 =0}^{n} (W^t_{j,j_1,j_2} Z^{t}_{i+1,j_1} A_{i+1,j_2}^{t} +  \theta)
\end{align}
\begin{align}
    A_{i,j}^{t+1} = h_H(\sum_{j_1= 0}^{m} \sum_{j_2 =0}^{n} (W^at_{j,j_1,j_2} Z^{t}_{i,j_1} A_{i,j_2}^{t} +  \theta^a)
\end{align},
where $h_H$ is the sigmoid activation function while $\theta$ and $\theta^a$ are the extended functions. The rest of the construction follows Theorem \ref{thm:2n} where the left and right tape operations are done in real-time.
\end{sproof}

\begin{corollary}
There exists an $11$ neuron unbounded-precision TRNN that can simulate any TM in O($T^2$), where $T$ is the total number of turns or steps required by the TM in order to compute the final output.
\end{corollary}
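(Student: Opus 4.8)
The plan is to instantiate the real-time simulation theorem stated immediately above with a fixed small universal Turing machine and then compose the two levels of simulation. First I would take $M$ to be the universal machine $UTM_{6,4}$ of \cite{neary2007four}, with $n = 6$ states and $m = 4$ tape symbols. Feeding this into the construction that produces a real-time ($O(T^1)$) TRNN with $K = m + n + 1$ neurons yields a TRNN with exactly $4 + 6 + 1 = 11$ unbounded-precision neurons that simulates $UTM_{6,4}$ with $O(1)$ TRNN cycles per UTM transition. The alternative machine $UTM_{7,3}$ works just as well here, since $3 + 7 + 1 = 11$.

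Next I would apply Theorem \ref{thm:product_and_DTM}: any deterministic single-tape Turing machine $M'$ that runs in time $t$ is simulated by $UTM_{6,4}$ in space $O(n)$ and time $O(t^2)$. Chaining the two simulations, the $11$-neuron TRNN tracks $UTM_{6,4}$ in real time while $UTM_{6,4}$ tracks $M'$ with a quadratic slowdown, so the TRNN simulates $M'$ in $O(T^2)$ cycles, where $T$ is the running time of $M'$. This is exactly the stated bound; it improves on the $O(T^4)$ bound of the earlier $17$-neuron corollary for the single reason that the per-step cost of the underlying TRNN-to-TM simulation has dropped from $O(T^2)$ (Theorem \ref{thm:2n}) to $O(T)$.

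The one point that genuinely requires care --- and the step I expect to be the main obstacle --- is verifying that the real-time construction applies to the chosen UTM without any increase in the neuron count: one must check that $UTM_{6,4}$ (resp.\ $UTM_{7,3}$) conforms to the single-tape deterministic format and the transition-table encoding assumed by Equation \ref{eq:second_order} and its real-time refinement, so that the tensor weights $W, W^a$ and the threshold functions $\theta, \theta^a$ can be populated directly from its $6 \times 4$ (resp.\ $7 \times 3$) transition table, with the tape and controller-state columns fitting inside $m + n + 1$ neurons exactly as in the general construction. One must also be explicit that the $T$ in the statement is the running time of the simulated machine $M'$, not of the intermediary UTM. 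Once these points are confirmed, the neuron count is a one-line substitution and the $O(T^2)$ time bound is an immediate composition of the two simulations, so no further computation is needed.
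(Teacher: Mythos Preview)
Your proposal is correct and follows exactly the approach the paper intends: instantiate the real-time $K=m+n+1$ construction with the small universal machine $UTM_{6,4}$ (or $UTM_{7,3}$) to get $11$ neurons, then compose with Theorem~\ref{thm:product_and_DTM}'s $O(T^2)$ UTM-to-TM simulation. The paper itself leaves the corollary unproved and only remarks afterward that ``we can construct a UTM with $6$ states and $4$ symbols that can simulate any TM in $O(T^2)$ cycles,'' so your write-up is in fact more detailed than the original.
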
 
It is worth noting that prior results have shown that RNNs are Turing complete with $40$ unbounded precision neurons. Notably, they operate in O($T^3$) cycles and, with a UTM, it operates in O($T^6$). Our proof shows that the smallest RNN with only \textbf{$11$ unbounded precision} neurons is Turing complete and can work in real-time \textbf{O($T$)}. More precisely, we can construct a UTM with $6$ states and $4$ symbols that can simulate any TM in O($T^2$) cycles. 
In the next section, we will show, with bounded precision, what is the computational limit of the TRNN.



\section{Tensor RNNs with Bounded Precision}
\label{sec:trnn_bounded_prec}

Here, we will revisit some of the tighter bounds placed on TRNNs, based on notions from \cite{omlin1996constructing}, and show that TRNNs are strictly more powerful compared to modern-day RNNs such as LSTMs and GRUs. To do this, we will look at a few definitions with respect to the fixed point and the sigmoid activation function. We will show a vectorized dynamic version of the DFA, which closely resembles the workings of neural networks. Thus, such a DFA can be represented as follows:
\begin{definition}\label{FSA_def_3}\emph{(DFA, vectorized dynamic version)}
  At any time $t$, the state of a deterministic finite-state automaton is represented by the state vector
  $$\bar{Q}^t\in\{0,1\}^n,$$
  whose components $\bar{Q}^t{}_i$ are
  \begin{equation*}
    \bar{Q}^t{}_i = \left\{
      \begin{array}{ll}
        1 & q_i = q^t \\
        0 & q_i \neq q^t.
      \end{array}
    \right.
  \end{equation*}

  \noindent 
  The next state vector $\bar{Q}^{t+1}$ is given by the dynamic relation
  $$\bar{Q}^{t+1} = W^{t+1}\cdot\bar{Q}^t,$$
  where the transition matrix $W^{t+1}$, determined by the transition tensor $W$ and the input vector $I^{t+1}$, is
  $$W^{t+1}=W\cdot I^{t+1}.$$ 
The input vector $I^{t+1}\in\{0,1\}^m$ has the components: 
  \begin{equation*}
    I^{t+1}{}_j = \left\{
      \begin{array}{ll}
        1 & \chara_j = \chara^{t+1} \\
        0 & \chara_j \neq \chara^{t+1},
      \end{array}
    \right.
  \end{equation*}
  and the transition tensor $W$ has the components: 
  \begin{equation*}
    W_i{}^{jk} = \left\{
      \begin{array}{ll}
        1 & q_i = \delta(q_j, \chara_k) \\
        0 & q_i \neq \delta(q_j, \chara_k). 
      \end{array}
    \right.
  \end{equation*}
  In component or neural network form, the dynamic relation explaining next state transition can then be rewritten as: 
  \begin{equation}\label{fsa_dynamics}
    \bar{Q}^{t+1}{}_i = \sum_{jk} W_{ijk}I^{t+1}{}_k\bar{Q}^t{}_j.
  \end{equation}
\end{definition}
\begin{definition}
Let us assume f : Z $\rightarrow Z$ to be a mapping in metric space. Thus, a point $z_f$ $\in$ Z is called a fixed point of the mapping such that f($z_f$) = $z_f$.
\end{definition}
We are interested in showing that the TRNN has a fixed point mapping that ensures that the construction stays stable. To achieve this, we define the stability of $z_f$ as follows:
\begin{definition}
A fixed point $z_f$ is considered stable if there exists a range $R =[i,j] \in Z$ such that $z_f$ $\in$ R and if iterations for $f$ start converging towards $z_f$ for any starting point {$z_s \in $ R}.
\end{definition}
It is shown that a continuous mapping function $f : Z \rightarrow Z$ at least has one fixed point. These definitions are important for showing that a TRNN with the sigmoid activation converges to fixed point and stays stable. 
Next, using fixed point analysis and Lemma \ref{lemma:stack_op_lem_2}, we will prove that a TRNN can simulate any DFA with bounded precision and weights. In particular, we will formally establish:
\begin{theorem}
    Given a DFA $M$ with $n$ states and $m$ input symbols, there exists a $k$-neuron bounded precision TRNN with sigmoid activation function ($h_H$), where $k = n+1$, initialized from an arbitrary distribution, that can simulate any DFA in real-time O($T$).
\end{theorem}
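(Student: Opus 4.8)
The plan is to directly program the DFA's transition tensor into the second-order weights of the TRNN and then use Lemma~\ref{lemma:stack_op_lem_2} to show that the sigmoid dynamics track the exact one-hot dynamics of Definition~\ref{FSA_def_3}, with an error that is re-contracted at every step and therefore never accumulates. Concretely, I would take $n$ ``state neurons'' $z_1,\dots,z_n$ (one per DFA state), one extra ``control neuron'' $z_{n+1}$, and set $W_{ijk}^{t}=W_i{}^{jk}$ for $i,j\le n$, where $W_i{}^{jk}$ is the DFA transition tensor ($1$ iff $q_i=\delta(q_j,\chara_k)$, $0$ otherwise), with a threshold of $-\tfrac12$ supplied directly or through $z_{n+1}$, and with the sensitivity $H$ of $h_H$ taken large. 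Feeding the current input symbol as a one-hot vector $x=I^{t+1}$ collapses the second-order update to the first-order recursion $z_i^{t+1}=h_H\bigl(\sum_j W_i{}^{jk^\ast}z_j^{t}-\tfrac12\bigr)$, which is exactly Equation~\ref{fsa_dynamics} passed through a saturating nonlinearity.

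The core of the argument is an induction on $t$ with the invariant $\|z^t-\bar Q^t\|_{L^\infty}\le\epsilon_0$ for a fixed $\epsilon_0<\tfrac12$, where $\bar Q^t$ is the one-hot encoding of the DFA state after $t$ symbols. For the inductive step, on a one-hot input $\sum_j W_i{}^{jk^\ast}\bar Q^t{}_j=\bar Q^{t+1}{}_i\in\{0,1\}$, so the exact pre-activation is $\tfrac12$-separated from $0$; perturbing $z^t$ by at most $\epsilon_0$ moves the pre-activation by at most $\epsilon_0$, and Lemma~\ref{lemma:stack_op_lem_2} then yields $\|z^{t+1}-\bar Q^{t+1}\|_{L^\infty}\le\epsilon$ for every $H$ large enough depending only on $\epsilon_0$ and $\epsilon$, not on $t$. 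Choosing $\epsilon\le\epsilon_0$ once and for all closes the induction and, at the same time, bounds the representation error uniformly in $T$ — which is precisely what fixed (bounded) precision needs: the weights are drawn from $\{0,1,-\tfrac12,H\}$ and every neuron stays within $\epsilon$ of $\{0,1\}$ for the entire run.

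For the ``arbitrary initialization'' clause I would use the control neuron $z_{n+1}$ as a one-shot reset: choose its incoming weights and bias so that (i) on the first update it saturates to $1$ regardless of its own initial value, and the large weight it contributes to the state neurons swamps their arbitrary initial values, forcing $z^1$ to be $\epsilon_0$-close to the one-hot encoding of the start state $q_0$; and (ii) thereafter $z_{n+1}$ stays pinned and the construction above takes over. (If ``arbitrary distribution'' is read more weakly, as any initialization already $\epsilon_0$-close to $\bar Q^0$, step~(i) is vacuous and the induction applies verbatim.) Real-time follows immediately, since one TRNN step consumes exactly one input symbol, so $T$ DFA steps cost $O(T)$ TRNN steps. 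To link with the fixed-point definitions preceding the theorem, I would note that for an absorbing/halt state $q$ with $\delta(q,\chara)=q$ the one-step map $f_\chara$ has the one-hot vector $\bar Q$ as a fixed point whose basin contains the ball of radius $\approx\tfrac12$, so the simulator is stable once the DFA halts.

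The main obstacle I expect is not the per-step estimate, which is a direct application of Lemma~\ref{lemma:stack_op_lem_2}, but rather (a) making the $z_{n+1}$-based reset genuinely work from a completely arbitrary initial distribution while remaining expressible through a single second-order weight tensor at fixed precision, and (b) verifying honestly that one sensitivity $H$ (and one precision) suffices for arbitrarily long inputs, i.e.\ that the contraction $\epsilon\le\epsilon_0$ is uniform in $t$. Both are resolved by the observation that $h_H$ re-binarizes the state at every step so errors cannot compound; the remainder of the write-up is mainly pinning down the weight values and the constants $\epsilon_0,\epsilon,H$.
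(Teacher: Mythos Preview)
Your proposal is correct and follows essentially the same route as the paper: encode the DFA transition tensor of Definition~\ref{FSA_def_3} directly into the second-order weights and argue by induction that, for large $H$, the sigmoid dynamics of Equation~\ref{eq:second_order_halt} track the one-hot DFA state of Equation~\ref{fsa_dynamics}. Your write-up is in fact more thorough than the paper's own proof --- you invoke Lemma~\ref{lemma:stack_op_lem_2} explicitly for the uniform-in-$t$ per-step contraction and you actually assign a role to the $(n{+}1)$-th neuron (a one-shot reset handling the ``arbitrary distribution'' clause), both of which the paper leaves implicit.
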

\begin{proof}
It can be seen that, at time $t$, the TRNN takes in the entire transition table as input. This means that a neural network simulator ($Z$) consists of the current state ($Q$) at time $t$ extracted from DFA $M$, the input symbol, and the current operation. The network dynamically evolves to determine next state $Z_{t+1}$ at $t+1$, which ideally represents the next state $Q_{t+1}$ in DFA. As seen by Definition \ref{FSA_def_3}, In particular, when we look at equation \ref{fsa_dynamics}, which represents the state of the DFA, and Equation \ref{eq:second_order_halt}, which represents the state of TRNN, we clearly see that both follow from each other. Thus, by induction, we observe that the TRNN is equivalent to the DFA using a differentiable activation function, since, for large values of $h_H$, the activation function will reach a stable point and converge to either $0$ or $1$, based on the network dynamics. 
\end{proof}

Similar results have been shown in prior work \cite{omlin1996constructing}. Specifically, this effort demonstrated that, using the sigmoid activation function $(h_H)$, one can obtain two stable fixed points in the range [$0-1$], such that state machine construction using a TRNN stays stable until the weight values are within a fixed point. 

It is important to note that prior saturated function analysis results \cite{Merrill19,weiss18} do not provide any bounds and the results are only valid for binary weights. On the other hand, we have derived the true computational power of (tensor) RNNs and our results are much more robust, showing the number of bounded precision neurons required to recognize any DFA in real-time. To the best of our knowledge, first order RNNs require at least \textbf{$2mn - m + 3$} neurons in order to recognize any DFA, where $m$ represents the total number of input symbols and $n$ represents the states of the DFA. In contrast, the TRNN requires only \textbf{($n+1$)} neurons to recognize any DFA.  This is true even with weight values that are initialized using a Gaussian distribution. Finally, our construction shows that the TRNN can encode a transition table into its weights, thus insertion and extraction of rules is much more stable with the TRNN. As a result, the model is also desirably \textbf{interpretable} by design and extraction. 

\section{Experimental Setup and Result}
\label{sec:experiments}

We conduct experiments on the $7$ Tomita grammars that fall under regular grammars and are considered to be the standard benchmark for evaluating memory-less models. We randomly sample strings of lengths up to $50$ in order to create train and validation splits. Each split contains $2000$ samples obtained by randomly sampling without replacement from the entire distribution. This is done so as to ensure that both distributions are far apart from each other. Second, we create two test splits with $1000$ samples in each partition. Test set \#$1$ contains samples different from prior splits and has strings up to length $60$  whereas test set \#$2$ contains samples of length up to $120$. We experiment with two widely popular architectures to serve as our baseline models, the long short-term memory (LSTM) network and the neural transformer. Mainly, we perform a grid search over the learning rate, number of layers, number of hidden units, and number of heads in order to obtain optimal (hyper-)parameter settings. 
Table \ref{tab:Tomita_settings} provides the range and parameter settings for each model. 
\begin{table}
\centering
\begin{tabular}{|c|c|}
    \hline
    \# & Definition \\
    \hline
    1 & $a^*$ \\
    2 & $(ab)^*$ \\
    3 & Odd ${\llbracket .... \rrbracket}_a$ of $a$'s must be followed by even ${\llbracket .... \rrbracket}_b$ of $b$'s \\
    4 & All strings without the trigram $aaa$ \\
    5 & Strings $I$ where ${\llbracket .... \rrbracket}_a(I)$ and ${\llbracket .... \rrbracket}_b(I)$ are even \\
    6 & Strings $I$ where ${\llbracket .... \rrbracket}_a(I)$ $\equiv_3$$ \llbracket .... \rrbracket_b(I)$ \\
    7 & $b^*a^*b^*a^*$ \\
    \hline
\end{tabular}%
\caption{Definitions of the Tomita languages. Let ${\llbracket .... \rrbracket}_{\sigma}(I)$ denote the number of occurrences of symbol $\sigma$ in string $I$. Let $\equiv_3$ denote equivalence mod $3$.}
\label{tab:tomita}
\end{table}
To better demonstrate the benefits of the TRNN and to show that the empirical model closely follows our theoretical results, we only experimented with a single-layer model with a maximum of $32$ hidden units. Weights were initialized using Xavier initialization for the baseline models, whereas for the TRNN, weights were initialized from a centered Gaussian distribution. In Table  \ref{tab:Tomita_performance}, we observe that the transformer-based architecture struggles in recognizing longer strings, despite achieving $100$\% accuracy on the validation splits. As has been shown in prior efforts, earlier transformer-based architectures heavily depend on positional encodings, which work well for machine translation. Nevertheless, for grammatical inference, one needs to go beyond the input distribution in order to efficiently recognize longer strings. Since positional encodings do not contain information on longer strings (or on those longer than what is available in the training set), a transformer model is simply not able to learn the state machine and thus struggles to recognize the input grammar. In addition, transformers need more parameters in order to recognize the input distribution; this is evident from Table \ref{tab:Tomita_settings}, where we show that transformers even with a large number of parameters struggle to learn complex grammars such as Tomita $3,5$ and $6$. 

However, as we can see from Table \ref{tab:Tomita_performance}, the LSTM matches TRNN performance but requires more parameters (as is also evident in Table \ref{tab:Tomita_settings}, which reports best settings for the LSTM). Third, when tested on longer strings, it becomes evident that the TRNN better understands the rules that can be inferred from the data, as opposed to simply memorizing the patterns (and thus only overfitting). This is clearly seen in Table \ref{tab:Tomita_performance_2}, where the LSTM performance starts dropping (whereas, in contrast, minimal loss is observed for the tensor model). Furthermore, in the next section, we present a stability analysis showing that one can stably extract automata from TRNN. In Figure \ref{fig:automata_extr}, we show the comparison between ground-truth (or oracle) and extracted state machines. 

These simulation results support our theoretical findings that a first-order neural model, e.g., LSTM, transformer, requires more parameters to learn a DFA than the TRNN. In other words, for a DFA with $100$ states ($m$) and $70$ input symbols ($n$), theoretically, a tensor RNN would only require $101$ neurons in order to successfully recognize the grammar. First-order RNNs, on the other hand, such as the LSTM, would require $2mn - m + 3n+1$ or $14111$ neurons. This is an order of magnitude larger than that required by a TRNN and such a trend is, as we have shown, empirically observable. 
\begin{table}
    \centering
    \begin{tabular}{l|r|r||r|r||r|r}
    \multicolumn{1}{l}{}&\multicolumn{2}{|c}{\begin{tabular}[x]{@{}c@{}}\textbf{$Attn$}\\\end{tabular}}&\multicolumn{2}{|c}{\begin{tabular}[x]{@{}c@{}}\textbf{$LSTM$}\\\end{tabular}}&\multicolumn{2}{|c}{\begin{tabular}[x]{@{}c@{}}\textbf{$\textbf{TRNN(ours)}$}\\\end{tabular}}\tabularnewline
    \multicolumn{1}{l}{}&\multicolumn{1}{|c}{\begin{tabular}[x]{@{}c@{}}\textbf{n=60}\\\end{tabular}}&\multicolumn{1}{|c}{\begin{tabular}[x]{@{}c@{}}\textbf{120}\\\end{tabular}} &\multicolumn{1}{|c}{\begin{tabular}[x]{@{}c@{}}\textbf{60}\\\end{tabular}}&\multicolumn{1}{|c}{\begin{tabular}[x]{@{}c@{}}\textbf{120}\\\end{tabular}}&\multicolumn{1}{|c}{\begin{tabular}[x]{@{}c@{}}\textbf{60}\\\end{tabular}}&\multicolumn{1}{|c}{\begin{tabular}[x]{@{}c@{}}\textbf{120}\\\end{tabular}}\tabularnewline
\hline
        \textit{Tm-1} & $100$ & $100$ & $100$ & $100$ &$100$ &$100$ \tabularnewline
        \textit{Tm-2}& $100$ & $100$ & $100$ & $100$ &$100$ &$100$ \tabularnewline
        \textit{Tm-3} & $75.00$ & $9.80$ & $99.99$ & $98.99$ &$100$ &$100$   \tabularnewline
        \textit{Tm-4} & $100$ & $92.00$ & $100$ & $100$ &$100$ &$100$
        \tabularnewline
        \textit{Tm-5} & $29.80$ & $0.0$ & $99.99$ & $98.50$ &$100$ &$100$  \tabularnewline
        \textit{Tm-6} & $88.00$ & $0.0$ & $100$ & $99.99$ &$100$ &$99.99$  \tabularnewline
        \textit{Tm-7} & $99.5$ & $99.20$ & $100$ & $100$ &$100$ &$100$  \tabularnewline
        
    \end{tabular}
    %
\caption{Percentage of correctly classified strings of ANNs trained on Tomita (Tm) languages (over $5$ trials). We report the mean accuracy (per model). The test set contained a mix of short and long samples of length up to $n=60 $ and $n = 120$ with both positive and negative samples, respectively.}
\label{tab:Tomita_performance}
\end{table}

\begin{table}

    \centering
    \begin{tabular}{l|r|r||r|r||r|r}
    \multicolumn{1}{l}{}&\multicolumn{2}{|c}{\begin{tabular}[x]{@{}c@{}}\textbf{$Attn$}\\\end{tabular}}&\multicolumn{2}{|c}{\begin{tabular}[x]{@{}c@{}}\textbf{$LSTM$}\\\end{tabular}}&\multicolumn{2}{|c}{\begin{tabular}[x]{@{}c@{}}\textbf{$\textbf{TRNN(ours)}$}\\\end{tabular}}\tabularnewline
    \multicolumn{1}{l}{}&\multicolumn{1}{|c}{\begin{tabular}[x]{@{}c@{}}\textbf{n=200}\\\end{tabular}}&\multicolumn{1}{|c}{\begin{tabular}[x]{@{}c@{}}\textbf{400}\\\end{tabular}} &\multicolumn{1}{|c}{\begin{tabular}[x]{@{}c@{}}\textbf{200}\\\end{tabular}}&\multicolumn{1}{|c}{\begin{tabular}[x]{@{}c@{}}\textbf{400}\\\end{tabular}}&\multicolumn{1}{|c}{\begin{tabular}[x]{@{}c@{}}\textbf{200}\\\end{tabular}}&\multicolumn{1}{|c}{\begin{tabular}[x]{@{}c@{}}\textbf{400}\\\end{tabular}}\tabularnewline
\hline
        \textit{Tm-1} & $72$ & $50$ & $80$ & $72.50$ &$100$ &$100$ \tabularnewline
        \textit{Tm-2}& $60$ & $35$ & $85.50$ & $75.50$ &$100$ &$100$ \tabularnewline
        \textit{Tm-3} & $1.50$ & $0.0$ & $80.00$ & $60.00$ &$99.50$ &$99.00$   \tabularnewline
        \textit{Tm-4} & $45$ & $15.0$ & $89.50$ & $80$ &$100$ &$100$
        \tabularnewline
        \textit{Tm-5} & $0.0$ & $0.0$ & $82.50$ & $68.50$ &$99.99$ &$98.50$  \tabularnewline
        \textit{Tm-6} & $0.0$ & $0.0$ & $78.50$ & $63.50$ &$99.99$ &$99.99$  \tabularnewline
        \textit{Tm-7} & $42.50$ & $8.50$ & $79.00$ & $61.00$ &$100$ &$99.99$  \tabularnewline
        
    \end{tabular} %
 \caption{We report the mean accuracy for each model. The test set contained a mix of very long samples of length up to $n=200 $ and $n = 400$ with both positive and negative samples, respectively.}
 \label{tab:Tomita_performance_2}
\end{table}

\begin{table}

    \centering
    \begin{tabular}{l|r|r||r|r||r|r}
    \multicolumn{1}{l}{}&\multicolumn{2}{|c}{\begin{tabular}[x]{@{}c@{}}\textbf{$Attn$}\\\end{tabular}}&\multicolumn{2}{|c}{\begin{tabular}[x]{@{}c@{}}\textbf{$LSTM$}\\\end{tabular}}&\multicolumn{2}{|c}{\begin{tabular}[x]{@{}c@{}}\textbf{$\textbf{TRNN(ours)}$}\\\end{tabular}}\tabularnewline
    \multicolumn{1}{l}{\textbf{ }}&\multicolumn{1}{|c}{\begin{tabular}[x]{@{}c@{}}\textbf{V-Acc}\\\end{tabular}}&\multicolumn{1}{|c}{\begin{tabular}[x]{@{}c@{}}\textbf{Itr}\\\end{tabular}} &\multicolumn{1}{|c}{\begin{tabular}[x]{@{}c@{}}\textbf{V-Acc}\\\end{tabular}}&\multicolumn{1}{|c}{\begin{tabular}[x]{@{}c@{}}\textbf{Itr}\\\end{tabular}}&\multicolumn{1}{|c}{\begin{tabular}[x]{@{}c@{}}\textbf{V-Acc}\\\end{tabular}}&\multicolumn{1}{|c}{\begin{tabular}[x]{@{}c@{}}\textbf{Itr}\\\end{tabular}}\tabularnewline
\hline
        \textit{Tm-1} & $100$ & $12$ & $100$ & $5$ &$100$ &$12$ \tabularnewline
        \textit{Tm-2}& $100$ & $11$ & $100$ & $4$ &$100$ &$7$ \tabularnewline
        \textit{Tm-3} & $98.99$ & $18$ & $100$ & $12$ &$100$ &$18$   \tabularnewline
        \textit{Tm-4} & $100$ & $9$ & $100$ & $9$ &$100$ &$16$
        \tabularnewline
        \textit{Tm-5} & $99.99$ & $13$ & $100$ & $9$ &$100$ &$15$  \tabularnewline
        \textit{Tm-6} & $100$ & $8$ & $100$ & $12$ &$100$ &$18$  \tabularnewline
        \textit{Tm-7} & $100$ & $13$ & $100$ & $21$ &$100$ &$29$  \tabularnewline
        
    \end{tabular}
\caption{Percentage of correctly classified strings of ANNs trained on the $Tomita$ languages (across $5$ trials). We report the mean accuracy for each model on the validation test including mean number of epochs required by the model in order to achieve perfect validation accuracy.}
\label{tab:Tomita_train_perf}
\end{table}


\section{Automata Extraction}
\label{sec:extraction}

Next, we test how well the TRNN model performs when we stably extract automata from the trained network stably. There are several automata extraction approaches published in the literature, such as methods based on clustering \cite{omlin1996extraction}, state-merging \cite{wang2019stateregularized}, and even querying algorithms such as L-star \cite{weiss2017extracting}. In Figure \ref{fig:automata_extr}, we compare the ground truth DFA against extracted DFA; the majority of the time ($8/10$), we observed that the TRNN extracts stable DFA, whereas the LSTM reduces to $6/10$ and vanilla RNN $5/10$. In our experiments, we use a clustering-based approach \cite{omlin1996extraction} to extract automata. Note that we notably observed slightly better performance when using a self-organizing map, as opposed to simple K-means. 

We observe that the vanilla RNN and LSTM benefit from training for a longer duration in terms of extraction. In Table \ref{tab:Tomita_train_perf}, we observe that all models reach almost $100\%$ accuracy on validation splits within $15$ epochs, however training for longer always helps with stable automata extraction. For instance, if we use early stopping and stop the training  whenever the model reaches $100$\% accuracy on the validation set for $5$ consecutive epochs, we observe that the automata extracted from the model are unstable. 

Another key observation is that the DFA extracted from such models have a large number of states and, as a result, minimizing them takes a huge amount of time. We set a $1500$ second threshold/cut-off time where, if a model is unable to extract DFA within this given timeframe, we consider it as unstable and report this finding. For instance, for the LSTM, the success rate drops to $4/10$ while for the RNN, it is $3/10$. On other hand, the TRNN can still extract the DFA but with exponentially more states and minimizing it necessary does not produce the minimal DFA. Thus, training for longer durations always helps, leading to more stable extraction. 

\begin{figure}[!t]
    \centering
    \includegraphics[width=1.0\columnwidth]{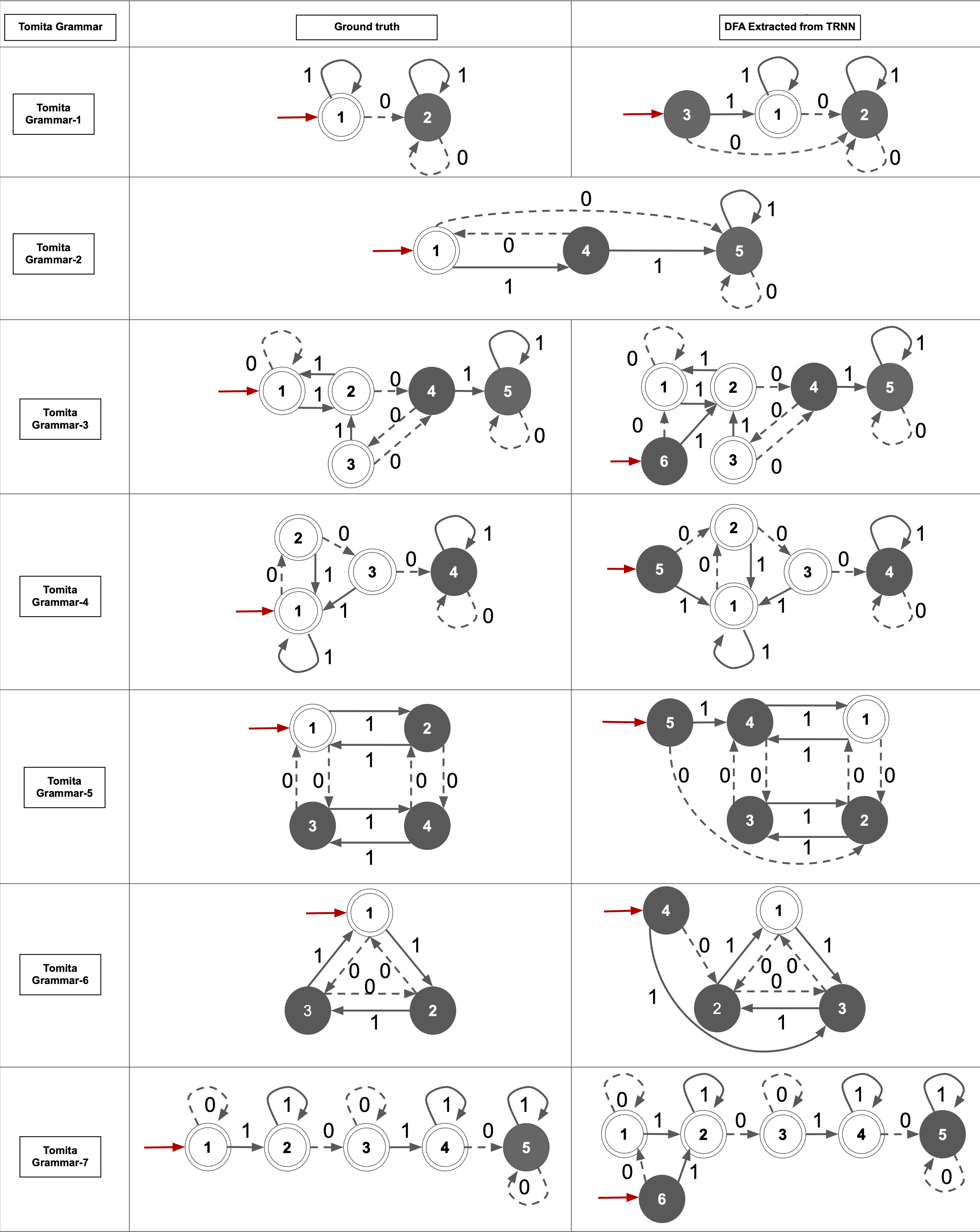}
    \caption{Comparison of the ground-truth (or oracle) DFA and the minimal DFA extracted from the $2^{nd}$ order recurrent neural network (TRNN) for all $7$ Tomita Grammars}
    \label{fig:automata_extr}
\end{figure}

\section{Conclusion}
\label{sec:conclusion}

This work presents a theoretical breakthrough demonstrating that a (tensor) recurrent neural network (RNN) with only $11$ unbounded precision neurons is Turing-complete, and importantly, operates in real-time. This is the first memory-less model that works in real-time and requires such a minimal number of neurons to achieve equivalence. In addition, we show that an $11$-neuron RNN with second-order synaptic connections can simulate any Turing machine in just $2$ steps, while previous models with $40$ neurons require $6$ steps to achieve the same result. We also prove that, with bounded precision, a tensor RNN (TRNN) can simulate any deterministic finite automaton (DFA) using only $n+1$ neurons, where $n$ is the total number of states of the DFA. We show that our construction is more robust than previous saturated function analysis approaches and, notably, provides an upper bound on the number of neurons and weights needed to achieve equivalence. Finally, we evaluate our TRNN model on the challenging Tomita grammars, demonstrating its superior performance compared to transformers and LSTMs, even when using fewer parameters. Our results highlight the potential of TRNNs for stable automata extraction processes and makr a promising direction for future responsible and interpretable AI research.





\newpage
\bibliographystyle{acm}
\bibliography{custom}

\begin{thebibliography}{10}

\bibitem{chung2021turing}
{\sc Chung, S., and Siegelmann, H.}
\newblock Turing completeness of bounded-precision recurrent neural networks.
\newblock {\em Advances in Neural Information Processing Systems 34\/} (2021).

\bibitem{graves2014neural}
{\sc Graves, A., Wayne, G., and Danihelka, I.}
\newblock Neural turing machines.
\newblock {\em arXiv preprint arXiv:1410.5401\/} (2014).

\bibitem{graves2016hybrid}
{\sc Graves, A., Wayne, G., Reynolds, M., Harley, T., Danihelka, I.,
  Grabska-Barwi{\'n}ska, A., Colmenarejo, S.~G., Grefenstette, E., Ramalho, T.,
  Agapiou, J., et~al.}
\newblock Hybrid computing using a neural network with dynamic external memory.
\newblock {\em Nature 538}, 7626 (2016), 471.

\bibitem{grefenstette2015learning}
{\sc Grefenstette, E., Hermann, K.~M., Suleyman, M., and Blunsom, P.}
\newblock Learning to transduce with unbounded memory.
\newblock In {\em Advances in neural information processing systems\/} (2015),
  pp.~1828--1836.

\bibitem{Hahn_2020}
{\sc Hahn, M.}
\newblock Theoretical limitations of self-attention in neural sequence models.
\newblock {\em Transactions of the Association for Computational Linguistics
  8\/} (Dec 2020), 156–171.

\bibitem{joulin2015inferring}
{\sc Joulin, A., and Mikolov, T.}
\newblock Inferring algorithmic patterns with stack-augmented recurrent nets.
\newblock In {\em Advances in neural information processing systems\/} (2015),
  pp.~190--198.

\bibitem{mali2019neural}
{\sc Mali, A., Ororbia, A., and Giles, C.~L.}
\newblock The neural state pushdown automata.
\newblock {\em arXiv preprint arXiv:1909.05233\/} (2019).

\bibitem{mali2021recognizing}
{\sc Mali, A., Ororbia, A., Kifer, D., and Giles, L.}
\newblock Recognizing long grammatical sequences using recurrent networks
  augmented with an external differentiable stack.
\newblock In {\em International Conference on Grammatical Inference\/} (2021),
  PMLR, pp.~130--153.

\bibitem{mali2021recognizing2}
{\sc Mali, A., Ororbia, A.~G., Kifer, D., and Giles, C.~L.}
\newblock Recognizing and verifying mathematical equations using multiplicative
  differential neural units.
\newblock In {\em Proceedings of the AAAI Conference on Artificial
  Intelligence\/} (2021), vol.~35, pp.~5006--5015.

\bibitem{mali2020neural}
{\sc Mali, A.~A., Ororbia~II, A.~G., and Giles, C.~L.}
\newblock A neural state pushdown automata.
\newblock {\em IEEE Transactions on Artificial Intelligence 1}, 3 (2020),
  193--205.

\bibitem{Merrill19}
{\sc Merrill, W.}
\newblock Sequential neural networks as automata.
\newblock {\em CoRR abs/1906.01615\/} (2019).

\bibitem{neary2007four}
{\sc Neary, T., and Woods, D.}
\newblock Four small universal turing machines.
\newblock In {\em International Conference on Machines, Computations, and
  Universality\/} (2007), Springer, pp.~242--254.

\bibitem{omlin1996constructing}
{\sc Omlin, C.~W., and Giles, C.~L.}
\newblock Constructing deterministic finite-state automata in recurrent neural
  networks.
\newblock {\em Journal of the ACM (JACM) 43}, 6 (1996), 937--972.

\bibitem{omlin1996extraction}
{\sc Omlin, C.~W., and Giles, C.~L.}
\newblock Extraction of rules from discrete-time recurrent neural networks.
\newblock {\em Neural networks 9}, 1 (1996), 41--52.

\bibitem{perez2019turing}
{\sc P{\'e}rez, J., Marinkovi{\'c}, J., and Barcel{\'o}, P.}
\newblock On the turing completeness of modern neural network architectures.
\newblock {\em arXiv preprint arXiv:1901.03429\/} (2019).

\bibitem{attn_turing}
{\sc PÃrez, J., BarcelÃ³, P., and Marinkovic, J.}
\newblock Attention is turing-complete.
\newblock {\em Journal of Machine Learning Research 22}, 75 (2021), 1--35.

\bibitem{siegelmann94}
{\sc Siegelmann, H.~T., and Sontag, E.~D.}
\newblock Analog computation via neural networks.
\newblock {\em Theor. Comput. Sci. 131}, 2 (1994), 331--360.

\bibitem{siegelmann95}
{\sc Siegelmann, H.~T., and Sontag, E.~D.}
\newblock On the computational power of neural nets.
\newblock {\em J. Comput. Syst. Sci. 50}, 1 (1995), 132--150.

\bibitem{stogin2020provably}
{\sc Stogin, j., Mali, A., and Giles, L.}
\newblock Provably stable interpretable encodings of context free grammars in
  rnns with a differentiable stack.
\newblock {\em arXiv preprint arXiv:2006.03651\/} (2020).

\bibitem{nndpa1998sun}
{\sc Sun, G.~Z., Giles, C.~L., and Chen, H.~H.}
\newblock {\em The neural network pushdown automaton: Architecture, dynamics
  and training}.
\newblock Springer Berlin Heidelberg, Berlin, Heidelberg, 1998, pp.~296--345.

\bibitem{wang2019stateregularized}
{\sc Wang, C., and Niepert, M.}
\newblock State-regularized recurrent neural networks.
\newblock In {\em International Conference on Machine Learning\/} (2019),
  pp.~6596--6606.

\bibitem{weiss2017extracting}
{\sc Weiss, G., Goldberg, Y., and Yahav, E.}
\newblock Extracting automata from recurrent neural networks using queries and
  counterexamples.
\newblock In {\em International Conference on Machine Learning\/} (2018),
  pp.~5247--5256.

\bibitem{weiss18}
{\sc Weiss, G., Goldberg, Y., and Yahav, E.}
\newblock On the practical computational power of finite precision rnns for
  language recognition.
\newblock In {\em Proceedings of the 56th Annual Meeting of the Association for
  Computational Linguistics, {ACL} 2018, Melbourne, Australia, July 15-20,
  2018, Volume 2: Short Papers\/} (2018), pp.~740--745.

\end{thebibliography}

\newpage
\appendix
\section{Appendix A: Hyperparameter Settings}
\label{sec:appendix}

\begin{table*}[!t]
\caption{We report different hyperparameter ranges for the grid search conducted in this work. We also show the best values considered for each model. Note that ``N/A'' represents not applicable. 
We use positional encodings for all transformer based models. }
\label{tab:Tomita_settings}
    \centering
     \resizebox{\textwidth}{!}{%
    \begin{tabular}{l|r|r||r|r||r|r}
    \multicolumn{1}{l}{}&\multicolumn{2}{|c}{\begin{tabular}[x]{@{}c@{}}\textbf{$Transformer$}\\\end{tabular}}&\multicolumn{2}{|c}{\begin{tabular}[x]{@{}c@{}}\textbf{$LSTM$}\\\end{tabular}}&\multicolumn{2}{|c}{\begin{tabular}[x]{@{}c@{}}\textbf{$\textbf{TRNN(ours)}$}\\\end{tabular}}\tabularnewline
    \multicolumn{1}{l}{\textbf{Hyper-parameters}}&\multicolumn{1}{|c}{\begin{tabular}[x]{@{}c@{}}\textbf{Range}\\\end{tabular}}&\multicolumn{1}{|c}{\begin{tabular}[x]{@{}c@{}}\textbf{Best}\\\end{tabular}} &\multicolumn{1}{|c}{\begin{tabular}[x]{@{}c@{}}\textbf{Range}\\\end{tabular}}&\multicolumn{1}{|c}{\begin{tabular}[x]{@{}c@{}}\textbf{Best}\\\end{tabular}}&\multicolumn{1}{|c}{\begin{tabular}[x]{@{}c@{}}\textbf{Range}\\\end{tabular}}&\multicolumn{1}{|c}{\begin{tabular}[x]{@{}c@{}}\textbf{Best}\\\end{tabular}}\tabularnewline
\hline
        \textit{Hidden-size} & $[32-256]$ & $192$ & $[32-128]$ & $128$ &$[4-32]$ &$16$ \tabularnewline
        \textit{Attention Heads}& $[1-4]$ & $3$ & $N/A$ & $N/A$ &$N/A$ &$N/A$ \tabularnewline
        \textit{number of layers} & $[1-5]$ & $3$ & $[1-3]$ & $2$ &$1$ &$1$   \tabularnewline
        \textit{learning rate} & $[1e-4, 3e-5]$ & $2e-4$ & $[1e-3, 2e-4]$ & $1e-4$ &$[1e-2, 2e-4]$ &$1e-3$
        \tabularnewline
        \textit{Optimizer} & $[SGD, Adam, RMSprop]$ & $Adam$ & $[SGD, Adam, RMSprop]$ & $SGD$ &$[SGD, Adam, RMSprop]$ &$SGD$  \tabularnewline
        
    \end{tabular}%
    }
\end{table*}

This section reports hyper-parameter for all 3 models, specifically in Table \ref{tab:Tomita_settings}. Observe that the TRNN requires a smaller number of neurons to obtain stable results on longer strings and desirably results in stable automata extraction. 
The codebase is written in TensorFlow 2.1, and all experiments are performed on an Intel Xeon server with a maximum clock speed of $3.5$GHz with four $2080$Ti GPUs with $192$GB RAM.

\section{Appendix B: Limitations}

This work focuses on grammatical inference and does not provide formal performance guarantees for natural language processing benchmarks. Note that the TRNN uses tensor connections which are computationally expensive and pose two main challenges. First, training tensor synaptic connections are difficult using backpropagation of errors through time, as the underlying computational graph for a tensor-based model is very dense, leading to a challenging credit assignment problem. Second, scalability is a concern since larger tensors of connections require vastly greater amounts of compute power. Even vanilla RNNs struggle to scale to billions of parameters due to their sequential dependencies. However, we note that the TRNN's interpretable nature provides flexibility in inserting, modifying, and extracting rules, which could make it more trustworthy than first-order ANNs. Therefore, exploring approximation approaches to tensor-based computing that can provide a computational benefit without compromising the TRNN's generalization ability would be worthwhile.

\section{Appendix C: Ethics and Societal Impact}
The Tensor Recurrent Neural Network (TRNN) proposed in this work can encode rules and generate more interpretable outcomes than other first-order ANNs. As a result, TRNNs could significantly impact society, particularly in domains such as healthcare and law, where transparency and trustworthiness are paramount. However, It is worth noting that this model does not guarantee ethical or unbiased outcomes, as it is still susceptible to biases in the training data. Therefore, it is essential to carefully evaluate the rules generated by such models and ensure that they align with the ethical framework developed by the underlying agency or governing body. By doing so, we can ensure that the benefits of TRNNs are maximized while minimizing the potential risks and negative impacts they may have. This is a critical consideration as we continue to develop and deploy more advanced AI systems in various domains. 
\end{document}